\title{Beyond $R$-barycenters: an effective averaging method on Stiefel and Grassmann manifolds}
\author{
\IEEEauthorblockN{Florent Bouchard, Nils Laurent, Salem Said, Nicolas Le Bihan}
\thanks{
Florent Bouchard is with Université Paris Saclay, CNRS, CentraleSupélec, laboratoire des signaux et systèmes.
Nils Laurent and Nicolas Le Bihan are with Université Grenoble Alpes, CNRS, Grenoble INP, Gipsa-lab.
Salem Said is with Université Grenoble Alpes, CNRS, Grenoble INP, laboratoire Jean Kuntzmann.
This work has been partially supported by MIAI @ Grenoble Alpes, (ANR-19-P3IA-0003).
}}
\pgfplotsset{compat=1.16}
\newlength\height 
\newlength\width
\newcommand*{\MAT}[1]{\ensuremath{\boldsymbol{#1}}}
\newcommand*{\nfeatures}{\ensuremath{p}}
\newcommand*{\nrank}{\ensuremath{k}}
\newcommand*{\nsamples}{\ensuremath{n}}
\newcommand*{\realspace}{\ensuremath{\mathds{R}}}
\newcommand*{\St}{\ensuremath{\textup{St}_{\nfeatures,\nrank}}}
\newcommand*{\StAmbient}{\ensuremath{\realspace^{\nfeatures\times\nrank}}}
\newcommand*{\StTangent}[1]{\ensuremath{T_{#1}\St}}
\newcommand*{\Gr}{\ensuremath{\textup{Gr}_{\nfeatures,\nrank}}}
\newcommand*{\GrAmbient}{\ensuremath{\mathcal{S}_{\nfeatures}}}
\newcommand*{\GrTangent}[1]{\ensuremath{T_{#1}\Gr}}
\newcommand*{\manifold}{\ensuremath{\mathcal{M}}}
\newcommand*{\ambient}{\ensuremath{\mathcal{E}}}
\newcommand*{\tangentSpace}[1]{T_{#1}\manifold}
\newcommand*{\metric}[3]{\ensuremath{\langle #2, #3 \rangle_{#1}}}
\newcommand*{\eye}{\ensuremath{\MAT{I}}}
\DeclareMathOperator{\tr}{tr}
\DeclareMathOperator{\rank}{rank}
\DeclareMathOperator{\uf}{uf}
\DeclareMathOperator{\qf}{qf}
\DeclareMathOperator{\symm}{sym}
\DeclareMathOperator{\diff}{d}
\DeclareMathOperator{\tril}{tril}
\DeclareMathOperator{\expm}{expm}
\DeclareMathOperator{\logm}{logm}
\DeclareMathOperator{\err}{err}
\newcommand{\argmin}{\operatornamewithlimits{argmin}}
\newtheorem{definition}{Definition}
\newtheorem{proposition}{Proposition}
\definecolor{myblue}{HTML}{263b54}
\definecolor{myyellow}{HTML}{dea01e}
\definecolor{myred}{HTML}{E22623}
\definecolor{mylightblue}{HTML}{88B1FF}
\definecolor{mygreen}{HTML}{1FDE8B}
\definecolor{mypurple}{HTML}{A91FDE}
\begin{document}

\maketitle

\begin{abstract}
    In this paper, the issue of averaging data on a manifold is addressed.
    While the Fréchet mean resulting from Riemannian geometry appears ideal, it is unfortunately not always available and often computationally very expensive.
    To overcome this, $R$-barycenters have been proposed and successfully applied to Stiefel and Grassmann manifolds.
    However, $R$-barycenters still suffer severe limitations as they rely on iterative algorithms and complicated operators.
    We propose simpler, yet efficient, barycenters that we call $RL$-barycenters.
     We show that, in the setting relevant to most applications, our framework yields astonishingly simple barycenters: arithmetic means projected onto the manifold.
    We apply this approach to the Stiefel and Grassmann manifolds.
    On simulated data, our approach is competitive with respect to existing averaging methods, while computationally cheaper.
\end{abstract}

\begin{IEEEkeywords}
    Means on matrix manifolds;
    $R$-barycenters;
    Riemannian geometry;
    Stiefel manifold;
    Grassmann manifold
\end{IEEEkeywords}

\section{Introduction}
\label{sec:intro}

In statistical signal processing and machine learning, it is often necessary to average data.
Indeed, this is for instance leveraged for classification (\textit{e.g.}, nearest centroid classifier~\cite{tuzel2008pedestrian, barachant2011multiclass}), clustering (\textit{e.g.}, $K$-means~\cite{arthur2007k}), shrinkage (to build the target matrix)~\cite{ledoit2004well, raninen2021linear}, batch normalization~\cite{santurkar2018does}, \textit{etc}.
When data possess a specific structure, \textit{e.g.}, when they belong to a smooth manifold, one should expect their average to possess the same structure and be adapted to the geometry of the manifold.
In such a case, the arithmetic mean is not well-suited.
Examples of such structured data are
covariance matrices, which are symmetric positive definite matrices (see, \textit{e.g.},~\cite{bouchard2024fisher} for a full review oriented on geometry);
orthogonal matrices, which are embedded in the Stiefel manifold~\cite{edelman1998geometry, absil2008optimization, boumal2023introduction};
or subspaces, which correspond to the Grassmann manifold~\cite{edelman1998geometry, absil2004riemannian, absil2008optimization, batzies2015geometric, boumal2023introduction, bendokat2024grassmann}.
While this letter aims to deal with generic smooth manifolds, a special attention is given to the Stiefel and Grassmann manifolds.
These are especially useful in the context of dimensionality reduction (see \textit{e.g.},~\cite{collas2021probabilistic} with an application to clustering) or deep learning~\cite{huang2023normalization}.

To average data on a smooth manifold, Riemannian geometry is often exploited.
Riemannian geometry indeed induces geodesics, which generalize the notion of straight lines, and a distance on the manifold; see, \textit{e.g.},~\cite{absil2008optimization, boumal2023introduction}.
These in turn lead to the definition of the Fréchet mean, which perfectly fits the geometry of the manifold.
While such a Fréchet mean appears ideal, it is unfortunately not always available and often computationally quite expensive.
Indeed, the distance is not always known in closed form -- \textit{e.g.}, for the Stiefel manifold -- or involves complicated operators -- such as the matrix logarithm for Grassmann~\cite{edelman1998geometry, absil2004riemannian, batzies2015geometric, bendokat2024grassmann}.
Even when available, an iterative algorithm is usually needed to compute the Fréchet mean; see \textit{e.g.},~\cite{said2017riemannian, bouchard2024fisher} for SPD matrices or~\cite{absil2004riemannian, batzies2015geometric} for Grassmann.
This algorithm relies on two objects: the Riemannian exponential, which maps tangent vectors onto the manifold following geodesics, and its inverse, the Riemannian logarithm.

To overcome the limitations of the Riemannian Fréchet mean,~\cite{kaneko2012empirical, fiori2014tangent} have proposed simpler averaging methods on manifolds: the so-called $R$-barycenters.
They are defined through a fixed-point equation that mimics the one that characterizes the Riemannian Fréchet mean.
The Riemannian exponential is replaced by a simpler tool: a retraction~\cite{absil2008optimization}, which can simply be a first order approximation of the Riemannian exponential.
The Riemannian logarithm is then replaced by the inverse of the chosen retraction.
This approach has been successfully applied on the Stiefel and Grassmann manifolds in~\cite{kaneko2012empirical, fiori2014tangent}.
While the $R$-barycenter framework is simpler than Riemannian Fréchet means, it still features major drawbacks.
Indeed, an iterative procedure is still needed and one has to combine a retraction with its exact inverse.
This second point appears as the most limiting one.
Indeed, for all considered retractions in~\cite{kaneko2012empirical, fiori2014tangent}, either the retraction or its inverse involves costly and possibly unstable operations.

In this letter, we follow a different path, recalling that the idea behind retractions is to simplify Riemannian exponentials.
Rather than choosing the inverse retraction to replace the Riemannian logarithm, we propose to leverage simpler liftings, which map points on the manifold onto tangent spaces, hence approximating the Riemannian logarithm.
This yields the so-called $RL$-barycenters.
Choosing the widely spread projection based retraction~\cite{absil2012projection} and the simplest lifting built on the Riemannian projection onto tangent spaces, we find out that the resulting $RL$-barycenter is astonishingly simple.
Indeed, it is just the projection onto the manifold of the arithmetic mean of the data.
Applied to the Stiefel manifold, we show that the resulting barycenter is in fact a closed form solution of the $R$-barycenter associated to the orthographic retraction from~\cite{kaneko2012empirical}.
We also extend our result to the projection based on QR decomposition, showing that the resulting projected mean is also an $RL$-barycenter.
In order to apply our approach on the Grassmann manifold, we derive the projection from the ambient space onto the manifold.
Numerical experiments are conducted on simulated data.
Our projected means perform better than existing $R$-barycenters on Stiefel.
We also do not lose too much accuracy as compared to the Riemannian Fréchet mean on Grassmann.
Due to their simplicity and reasonable complexity, on Stiefel and Grassmann manifolds, our proposed projected means appear very advantageous as compared to other existing averaging methods.

To ensure reproducibility, the code is available at \url{https://github.com/flbouchard/projection_barycenter}.

\section{Background}
\label{sec:background}
\subsection{Stiefel and Grassmann manifolds}

The real Stiefel manifold is the homogeneous space of $\nfeatures\times\nrank$ orthogonal matrices~\cite{edelman1998geometry, absil2008optimization, boumal2023introduction}, \textit{i.e.},
\begin{equation}
    \St = \{ \MAT{U}\in\StAmbient: \; \MAT{U}^\top\MAT{U} = \eye_\nrank \}.
\label{eq:st}
\end{equation}
The projection map from $\StAmbient$ onto $\St$ according to the Euclidean distance is~\cite[Theorem 4.1]{higham1988matrix}
\begin{equation}
    \mathcal{P}^{\St}(\MAT{X}) = \argmin_{\MAT{U}\in\St} \; \| \MAT{X} - \MAT{U} \|_2^2 = \uf(X),
\label{eq:st_proj}
\end{equation}
where $\uf(\cdot)$ returns the orthogonal factor of the polar decomposition.
The tangent space of $\St$ at $\MAT{U}$ is~\cite{edelman1998geometry, absil2008optimization, boumal2023introduction}
\begin{equation}
    \StTangent{\MAT{U}} = \{ \MAT{\xi}\in\StAmbient: \; \MAT{U}^\top\MAT{\xi} + \MAT{\xi}^\top\MAT{U} = \MAT{0} \}.
\label{eq:st_tangent}
\end{equation}
Since $\St$ is a submanifold of the Euclidean space $\StAmbient$, it can simply be turned into a Riemannian manifold by endowing it with the Euclidean metric
\begin{equation}
    \metric{\MAT{U}}{\MAT{\xi}}{\MAT{\eta}} = \tr(\MAT{\xi}^\top\MAT{\eta}).
\label{eq:eucl_metric}
\end{equation}
The corresponding orthogonal projection from $\StAmbient$ onto $\StTangent{\MAT{U}}$ is~\cite{edelman1998geometry, absil2008optimization, boumal2023introduction}
\begin{equation}
    P^{\St}_{\MAT{U}}(\MAT{Z}) = \MAT{Z} - \MAT{U}\symm(\MAT{U}^\top\MAT{Z}),
\label{eq:st_tangent_proj}
\end{equation}
where $\symm(\cdot)$ returns the symmetrical part of its argument.

The Grassmann manifold is the manifold of $\nrank$-dimensional subspaces in the Euclidean space $\realspace^\nfeatures$~\cite{edelman1998geometry, absil2004riemannian, absil2008optimization, batzies2015geometric, boumal2023introduction, bendokat2024grassmann}.
There exist various ways of representing it.
For instance, it can be viewed as a quotient manifold of the Stiefel manifold $\St$ with the orthogonal group $\mathcal{O}_{\nrank}$~\cite{edelman1998geometry, absil2004riemannian, absil2008optimization, batzies2015geometric, boumal2023introduction, bendokat2024grassmann}.
In this article, as in~\cite{batzies2015geometric, bendokat2024grassmann}, we identify it with the set of orthogonal rank $\nrank$ projectors, \textit{i.e.},
\begin{equation}
    \Gr = \{ \MAT{P}\in\GrAmbient: \; \MAT{P}^2=\MAT{P}, \; \rank(\MAT{P}) = \nrank \},
\label{eq:gr}
\end{equation}
where $\GrAmbient$ denotes the Euclidean space of $\nfeatures\times\nfeatures$ symmetric matrices.
This representation of the Grassmann manifold $\Gr$ is linked to the Stiefel manifold $\St$ through the projection mapping
\begin{equation}
    \pi : \MAT{U}\in\St \mapsto \MAT{U}\MAT{U}^\top\in\Gr.
\label{eq:st_to_gr}
\end{equation}
Even though the formula is quite intuitive and related to principal component analysis, we could not find the projection map from $\GrAmbient$ onto $\Gr$ identified as~\eqref{eq:gr} in the literature.
We thus provide it in Section~\ref{sec:barycenters}, which contains our contributions.
%
%
The tangent space of the Grassmann manifold identified as~\eqref{eq:gr} at $\MAT{P}\in\Gr$ is~\cite{bendokat2024grassmann}
\begin{equation}
    \GrTangent{\MAT{P}} = \{ \MAT{\xi}\in\GrAmbient: \; \MAT{P}\MAT{\xi} + \MAT{\xi}\MAT{P} = \MAT{\xi} \}.
\end{equation}
Since, in this case, $\Gr$ is a submanifold of $\GrAmbient$, it can also be turned into a Riemannian manifold by endowing it with the Euclidean metric~\eqref{eq:eucl_metric}.
The corresponding orthogonal projection from $\GrAmbient$ onto $\GrTangent{\MAT{P}}$ is~\cite{bendokat2024grassmann}
\begin{equation}
    P^{\Gr}_{\MAT{P}}(\MAT{Z}) = 2\symm((\eye_{\nfeatures} - \MAT{P})\MAT{Z}\MAT{P}).
\label{eq:gr_tangent_proj}
\end{equation}

\subsection{Barycenters on matrix manifolds}

When aiming to compute a barycenter on a Riemannian matrix manifold $\manifold$, the ideal solution appears to employ the Riemannian mean.
Such manifold is equipped with a Riemannian metric $\metric{\cdot}{\cdot}{\cdot}$, which yields a Riemannian distance $\delta(\cdot,\cdot)$ on $\manifold$.
This distance can be exploited to define the corresponding Riemannian mean (or Fréchet mean).
Given samples $\{\MAT{M}_i\}_{i=1}^\nsamples$ in $\manifold$, their Riemannian mean $\MAT{G}\in\manifold$ is the solution to the optimization problem~\cite{afsari2013convergence}
\begin{equation}
    \MAT{G} = \argmin_{\MAT{G}\in\manifold} \quad \sum_{i=1}^\nsamples \delta^2(\MAT{M}_i,\MAT{G}).
\end{equation}
It is usually not known in closed form.
To compute it, one can employ the Riemannian gradient descent, which yields the following fixed-point algorithm~\cite{manton2004globally,afsari2013convergence}
\begin{equation}
    \MAT{G}^{(t+1)} = \exp_{\MAT{G}^{(t)}} \left( \frac{1}{\nsamples}\sum_{i=1}^{\nsamples} \log_{\MAT{G}^{(t)}}(\MAT{M}_i) \right),
\label{eq:Rmean_algo}
\end{equation}
where $\exp_{\MAT{G}}:\tangentSpace{\MAT{G}}\to\manifold$ and $\log_{\MAT{G}}:\manifold\to\tangentSpace{\MAT{G}}$ are the Riemannian exponential and logarithm at $\MAT{G}\in\manifold$.
The Riemannian exponential is defined through the geodesics, which generalize the notion of straight lines to Riemannian manifolds.
The Riemannian logarithm is its (local) inverse.

Unfortunately, even though it seems the most natural option, the Riemannian mean is often very complicated to compute in practice.
This is because Riemannian exponential and logarithm operators are computationally expensive in many cases.
In fact, they are not always known in closed form (especially the Riemannian logarithm) and, even when they are, their computation usually involves costly operations.
For instance, for the Stiefel manifold, the Riemannian exponential involves a matrix exponential~\cite{edelman1998geometry, absil2008optimization, zimmermann2022computing} while the Riemannian logarithm is not known in closed form and can only be computed with a heavy iterative algorithm~\cite{zimmermann2017matrix, zimmermann2022computing, mataigne2024efficient}.

To overcome the fact that the Riemannian exponential is often too expensive, a simpler tool to map tangent vectors onto the manifold has been designed in the context of optimization: the retraction~\cite{absil2008optimization}.
A retraction is, at $\MAT{G}\in\manifold$, a mapping $R_{\MAT{G}}:\tangentSpace{\MAT{G}}\to\manifold$ such that $R_{\MAT{G}}(\MAT{\xi})=\MAT{G} + \MAT{\xi} + o(\| \MAT{\xi} \|)$.
Retractions are (at least) first order approximations of the Riemannian exponential.
Notice that on a manifold, there are often several retractions available.
Beyond optimization, retractions have been leveraged to design barycenters on manifolds: the so-called $R$-barycenters~\cite{kaneko2012empirical, fiori2014tangent}.
The goal is to propose simpler barycenters than the Riemannian mean while respecting the structure of the manifold.
This appears particularly attractive for manifolds whose Riemannian exponential and/or logarithm are not known in closed form such as the Stiefel manifold.
The idea is to mimic~\eqref{eq:Rmean_algo}, replacing the Riemannian exponential and logarithm with a retraction and its inverse~\cite{kaneko2012empirical, fiori2014tangent}.
Formally, the resulting fixed-point algorithm is
\begin{equation}
    \MAT{G}^{(t+1)} = R_{\MAT{G}^{(t)}} \left( \frac{1}{\nsamples}\sum_{i=1}^{\nsamples} R_{\MAT{G}^{(t)}}^{-1}(\MAT{M}_i) \right).
\label{eq:Rbarycenter_algo}
\end{equation}
In practice, this approach has been exploited on the Stiefel manifold with various retractions~\cite{kaneko2012empirical}.
The first one is the one based on the projection~\eqref{eq:st_proj} (polar decomposition), \textit{i.e.},
\begin{equation}
    R^{\uf}_{\MAT{U}}(\MAT{\xi}) = \mathcal{P}^{\St}(\MAT{U}+\MAT{\xi}) = \uf(\MAT{U}+\MAT{\xi}).
\label{eq:st_retr_polar}
\end{equation}
The second one is based on the QR decomposition, \textit{i.e.},
\begin{equation}
    R^{\qf}_{\MAT{U}}(\MAT{\xi}) = \qf(\MAT{U}+\MAT{\xi}),
\label{eq:st_retr_qr}
\end{equation}
where $\qf(\cdot)$ returns the orthogonal factor of the QR decomposition.
For these two retractions, computing the inverse is not straightforward.
In both case, it involves solving equations not admitting closed form solutions.
The third retraction is the so-called orthographic retraction~\cite{kaneko2012empirical}.
This has a straightforward inverse, while the retraction itself is implicitly
defined and involves solving a Ricatti equation.
The inverse retraction exploits the orthogonal projection~\eqref{eq:st_tangent_proj} and is given by
\begin{equation}
    R^{\textup{o} \; -1}_{\MAT{U}}(\MAT{V}) = P^{\St}_{\MAT{U}}(\MAT{V}-\MAT{U}).
\label{eq:st_retr_orthographic}
\end{equation}

For all above $R$-barycenters, a simple expression exists either for the retraction $R_{\cdot}(\cdot)$ or the inverse retraction $R_{\cdot}^{-1}(\cdot)$, but numerically solving an equation, possibly costly and unstable, is necessary for the other operation.
Indeed, as explained in~\cite{kaneko2012empirical}, a solution to such equation is only guaranteed in a neighborhood of $\MAT{U}\in\St$.
Hence, the resulting procedure~\eqref{eq:Rbarycenter_algo} appears quite complicated and heavy.
Moreover, the motivation behind retractions is to simplify the Riemannian exponential.
Exactly taking the inverse retraction, which is complicated, does not seem to follow this philosophy.

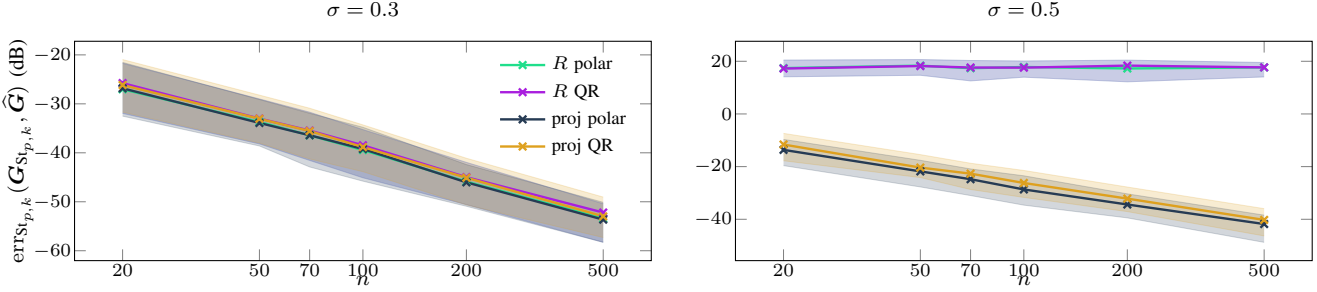
\begin{figure*}[t]
    \setlength\height{4.5cm} 
\setlength\width{0.51\linewidth}

\begin{tikzpicture}

\begin{semilogxaxis}[
    width  =\width,
    height =\height,
    at     ={(0,0)},
    xlabel = {$\nsamples$},
    xlabel style = {font=\footnotesize,yshift=1.5ex},
    xtick = {20,50,70,100,200,500},
    xticklabels = {$20$,$50$,$70$,$100$,$200$,$500$},
    xticklabel style = {font=\scriptsize,yshift=0.5ex},
    ylabel = {$\err_{\St}(\MAT{G}_{\St},\MAT{\widehat{G}})$~(dB)},
    ylabel style = {font=\footnotesize,yshift=-1.5ex},
    yticklabel style = {font=\scriptsize,xshift=0.5ex},
    title = {\footnotesize$\sigma=0.3$},
    legend style={legend cell align=left,align=left,draw=none,fill=none,font=\scriptsize,legend columns=1,transpose legend}
]
    
    \addplot[color=mygreen,opacity=0.2,line width=0.1pt, forget plot, name path=A] table [x=n,y expr=20*log10(\thisrow{R_polar_q10}),col sep=comma] {./figures/error_st_p10_k5_scale03_nMC100.csv};
    \addplot[color=mygreen,opacity=0.2,line width=0.1pt, forget plot, name path=B] table [x=n,y expr=20*log10(\thisrow{R_polar_q90}),col sep=comma] {./figures/error_st_p10_k5_scale03_nMC100.csv};
    \addplot[color=mygreen,opacity=0.2, forget plot] fill between[of=A and B];
    
    \addplot[color=mypurple,opacity=0.2,line width=0.1pt, forget plot, name path=A] table [x=n,y expr=20*log10(\thisrow{R_polar_q10}),col sep=comma] {./figures/error_st_p10_k5_scale03_nMC100.csv};
    \addplot[color=mypurple,opacity=0.2,line width=0.1pt, forget plot, name path=B] table [x=n,y expr=20*log10(\thisrow{R_polar_q90}),col sep=comma] {./figures/error_st_p10_k5_scale03_nMC100.csv};
    \addplot[color=mypurple,opacity=0.2, forget plot] fill between[of=A and B];
    
    \addplot[color=myblue,opacity=0.2,line width=0.1pt, forget plot, name path=A] table [x=n,y expr=20*log10(\thisrow{proj_polar_q10}),col sep=comma] {./figures/error_st_p10_k5_scale03_nMC100.csv};
    \addplot[color=myblue,opacity=0.2,line width=0.1pt, forget plot, name path=B] table [x=n,y expr=20*log10(\thisrow{proj_polar_q90}),col sep=comma] {./figures/error_st_p10_k5_scale03_nMC100.csv};
    \addplot[color=myblue,opacity=0.2, forget plot] fill between[of=A and B];
    
    \addplot[color=myyellow,opacity=0.2,line width=0.1pt, forget plot, name path=A] table [x=n,y expr=20*log10(\thisrow{proj_qr_q10}),col sep=comma] {./figures/error_st_p10_k5_scale03_nMC100.csv};
    \addplot[color=myyellow,opacity=0.2,line width=0.1pt, forget plot, name path=B] table [x=n,y expr=20*log10(\thisrow{proj_qr_q90}),col sep=comma] {./figures/error_st_p10_k5_scale03_nMC100.csv};
    \addplot[color=myyellow,opacity=0.2, forget plot] fill between[of=A and B];

    \addplot[color=mygreen,line width=0.9pt,mark=x,mark size=2pt] table [x=n,y expr=20*log10(\thisrow{R_polar_median}),col sep=comma] {./figures/error_st_p10_k5_scale03_nMC100.csv};
    \addlegendentry{$R$ polar};
    
    \addplot[color=mypurple,line width=0.9pt,mark=x,mark size=2pt] table [x=n,y expr=20*log10(\thisrow{R_qr_median}),col sep=comma] {./figures/error_st_p10_k5_scale03_nMC100.csv};
    \addlegendentry{$R$ QR};
    
    \addplot[color=myblue,line width=0.9pt,mark=x,mark size=2pt] table [x=n,y expr=20*log10(\thisrow{proj_polar_median}),col sep=comma] {./figures/error_st_p10_k5_scale03_nMC100.csv};
    \addlegendentry{proj polar};
    
    \addplot[color=myyellow,line width=0.9pt,mark=x,mark size=2pt] table [x=n,y expr=20*log10(\thisrow{proj_qr_median}),col sep=comma] {./figures/error_st_p10_k5_scale03_nMC100.csv};
    \addlegendentry{proj QR};

\end{semilogxaxis}

\begin{semilogxaxis}[
    width  =\width,
    height =\height,
    at     ={(0.95\width,0)},
    xlabel = {$\nsamples$},
    xlabel style = {font=\footnotesize,yshift=1.5ex},
    xtick = {20,50,70,100,200,500},
    xticklabels = {$20$,$50$,$70$,$100$,$200$,$500$},
    xticklabel style = {font=\scriptsize,yshift=0.5ex},
    yticklabel style = {font=\scriptsize,xshift=0.5ex},
    title = {\footnotesize$\sigma=0.5$},
]
    
    \addplot[color=mygreen,opacity=0.2,line width=0.1pt, forget plot, name path=A] table [x=n,y expr=20*log10(\thisrow{R_polar_q10}),col sep=comma] {./figures/error_st_p10_k5_scale05_nMC100.csv};
    \addplot[color=mygreen,opacity=0.2,line width=0.1pt, forget plot, name path=B] table [x=n,y expr=20*log10(\thisrow{R_polar_q90}),col sep=comma] {./figures/error_st_p10_k5_scale05_nMC100.csv};
    \addplot[color=mygreen,opacity=0.2, forget plot] fill between[of=A and B];
    
    \addplot[color=mypurple,opacity=0.2,line width=0.1pt, forget plot, name path=A] table [x=n,y expr=20*log10(\thisrow{R_polar_q10}),col sep=comma] {./figures/error_st_p10_k5_scale05_nMC100.csv};
    \addplot[color=mypurple,opacity=0.2,line width=0.1pt, forget plot, name path=B] table [x=n,y expr=20*log10(\thisrow{R_polar_q90}),col sep=comma] {./figures/error_st_p10_k5_scale05_nMC100.csv};
    \addplot[color=mypurple,opacity=0.2, forget plot] fill between[of=A and B];
    
    \addplot[color=myblue,opacity=0.2,line width=0.1pt, forget plot, name path=A] table [x=n,y expr=20*log10(\thisrow{proj_polar_q10}),col sep=comma] {./figures/error_st_p10_k5_scale05_nMC100.csv};
    \addplot[color=myblue,opacity=0.2,line width=0.1pt, forget plot, name path=B] table [x=n,y expr=20*log10(\thisrow{proj_polar_q90}),col sep=comma] {./figures/error_st_p10_k5_scale05_nMC100.csv};
    \addplot[color=myblue,opacity=0.2, forget plot] fill between[of=A and B];
    
    \addplot[color=myyellow,opacity=0.2,line width=0.1pt, forget plot, name path=A] table [x=n,y expr=20*log10(\thisrow{proj_qr_q10}),col sep=comma] {./figures/error_st_p10_k5_scale05_nMC100.csv};
    \addplot[color=myyellow,opacity=0.2,line width=0.1pt, forget plot, name path=B] table [x=n,y expr=20*log10(\thisrow{proj_qr_q90}),col sep=comma] {./figures/error_st_p10_k5_scale05_nMC100.csv};
    \addplot[color=myyellow,opacity=0.2, forget plot] fill between[of=A and B];

    \addplot[color=mygreen,line width=0.9pt,mark=x,mark size=2pt] table [x=n,y expr=20*log10(\thisrow{R_polar_median}),col sep=comma] {./figures/error_st_p10_k5_scale05_nMC100.csv};
    
    \addplot[color=mypurple,line width=0.9pt,mark=x,mark size=2pt] table [x=n,y expr=20*log10(\thisrow{R_qr_median}),col sep=comma] {./figures/error_st_p10_k5_scale05_nMC100.csv};
    
    \addplot[color=myblue,line width=0.9pt,mark=x,mark size=2pt] table [x=n,y expr=20*log10(\thisrow{proj_polar_median}),col sep=comma] {./figures/error_st_p10_k5_scale05_nMC100.csv};
    
    \addplot[color=myyellow,line width=0.9pt,mark=x,mark size=2pt] table [x=n,y expr=20*log10(\thisrow{proj_qr_median}),col sep=comma] {./figures/error_st_p10_k5_scale05_nMC100.csv};

\end{semilogxaxis}

\end{tikzpicture}
    \vspace*{-10pt}
    \caption{Medians (solid lines), 10\% and 90\% quantiles (filled areas) over $100$ realizations of error measure~\eqref{eq:err_st} of mean estimators on the Stiefel manifold $\St$.
    ``$R$ polar'' and ``$R$ QR'' correspond to $R$-barycenters with polar and QR retractions.
    ``proj polar'' and ``proj QR'' correspond to the projected arithmetic means with the projections on $\St$ based on the polar and QR decompositions, respectively.
    In these simulations, $p=10$ and $k=5$.}
    \label{fig:stiefel}
\end{figure*}

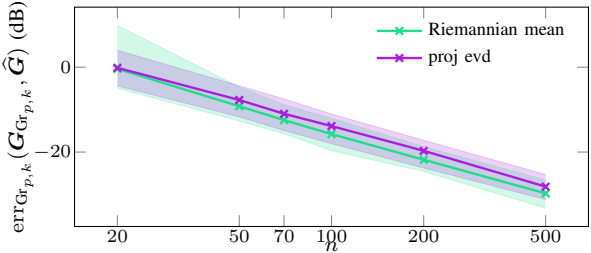
\begin{figure}[t]
    \setlength\height{4.5cm} 
\setlength\width{0.95\linewidth}

\begin{tikzpicture}

\begin{semilogxaxis}[
    width  =\width,
    height =\height,
    at     ={(0,0)},
    xlabel = {$\nsamples$},
    xlabel style = {font=\footnotesize,yshift=1.5ex},
    xtick = {20,50,70,100,200,500},
    xticklabels = {$20$,$50$,$70$,$100$,$200$,$500$},
    xticklabel style = {font=\scriptsize,yshift=0.5ex},
    ylabel = {$\err_{\Gr}(\MAT{G}_{\Gr},\MAT{\widehat{G}})$~(dB)},
    ylabel style = {font=\footnotesize,yshift=-1.5ex},
    yticklabel style = {font=\scriptsize,xshift=0.5ex},
    legend style={legend cell align=left,align=left,draw=none,fill=none,font=\scriptsize,legend columns=1,transpose legend}
]

    \addplot[color=mygreen,opacity=0.2,line width=0.1pt, forget plot, name path=A] table [x=n,y expr=20*log10(\thisrow{Riem_mean_q10}),col sep=comma] {./figures/error_gr_p10_k5_scale05_nMC100.csv};
    \addplot[color=mygreen,opacity=0.2,line width=0.1pt, forget plot, name path=B] table [x=n,y expr=20*log10(\thisrow{Riem_mean_q90}),col sep=comma] {./figures/error_gr_p10_k5_scale05_nMC100.csv};
    \addplot[color=mygreen,opacity=0.2, forget plot] fill between[of=A and B];
    
    \addplot[color=mypurple,opacity=0.2,line width=0.1pt, forget plot, name path=A] table [x=n,y expr=20*log10(\thisrow{proj_evd_q10}),col sep=comma] {./figures/error_gr_p10_k5_scale05_nMC100.csv};
    \addplot[color=mypurple,opacity=0.2,line width=0.1pt, forget plot, name path=B] table [x=n,y expr=20*log10(\thisrow{proj_evd_q90}),col sep=comma] {./figures/error_gr_p10_k5_scale05_nMC100.csv};
    \addplot[color=mypurple,opacity=0.2, forget plot] fill between[of=A and B];

    \addplot[color=mygreen,line width=0.9pt,mark=x,mark size=2pt] table [x=n,y expr=20*log10(\thisrow{Riem_mean_median}),col sep=comma] {./figures/error_gr_p10_k5_scale05_nMC100.csv};
    \addlegendentry{Riemannian mean};
    
    \addplot[color=mypurple,line width=0.9pt,mark=x,mark size=2pt] table [x=n,y expr=20*log10(\thisrow{proj_evd_median}),col sep=comma] {./figures/error_gr_p10_k5_scale05_nMC100.csv};
    \addlegendentry{proj evd};

\end{semilogxaxis}

\end{tikzpicture}
    \vspace*{-10pt}
    \caption{Medians (solid lines), 10\% and 90\% quantiles (filled areas) over $100$ realizations of error measure~\eqref{eq:err_gr} of mean estimators on the Grassmann manifold $\Gr$.
    In the legend, ``proj evd'' corresponds to the projected arithmetic mean with the projection on $\Gr$ based on the eigenvalue decomposition.
    In these simulations, $p=10$, $k=5$ and $\sigma=0.5$.}
    \label{fig:grassmann}
\end{figure}

\section{Projection based barycenters}
\label{sec:barycenters}
This section contains our contribution.
Our original idea is to simplify~\eqref{eq:Rbarycenter_algo} by dropping the requirement of choosing the inverse retraction.
We rather replace that with a lifting, which, at $\MAT{G}\in\manifold$, is a mapping $L_{\MAT{G}}:\manifold\to\tangentSpace{\MAT{G}}$ such that $L_{\MAT{G}}(\MAT{M})=\MAT{M}-\MAT{G}+o(\|\MAT{M}\|)$.
The resulting barycenters, named retraction-lifting barycenters, and denoted $RL$-barycenters, are defined in Definition~\ref{def:RLbarycenter}.

\begin{definition}[$RL$-barycenters]
    \label{def:RLbarycenter}
    Given the retraction $R_{\cdot}:\tangentSpace{\cdot}\to\manifold$ and lifting $L_{\cdot}:\manifold\to\tangentSpace{\cdot}$, the so-called $RL$-barycenter $\MAT{G}\in\manifold$ of samples $\{\MAT{M}_i\}_{i=1}^\nsamples$ in $\manifold$, if it exists, is solution to the fixed-point equation
    \begin{equation*}
        \MAT{G} = R_{\MAT{G}}\left( \frac{1}{\nsamples}\sum_{i=1}^{\nsamples} L_{\MAT{G}}(\MAT{M}_i) \right).
    \end{equation*}
    Notice that the point $\MAT{G}\in\manifold$ is solution if it verifies $\frac{1}{\nsamples}\sum_{i=1}^{\nsamples} L_{\MAT{G}}(\MAT{M}_i) = \MAT{0}$.
\end{definition}

This formalism of $RL$-barycenters encompasses the existing ones of Riemannian means -- with $R_{\MAT{G}}(\cdot)=\exp_{\MAT{G}}(\cdot)$ and $L_{\MAT{G}}(\cdot)=\log_{\MAT{G}}(\cdot)$ -- and $R$-barycenters -- with $L_{\MAT{G}}(\cdot)=R_{\MAT{G}}^{-1}(\cdot)$.
It is more general since a wider range of choices of liftings is possible.
Hence, it allows to select retractions and liftings known in closed form and not too expensive to compute.
One can thus expect to obtain more tractable algorithms with this setting.
Choosing simple yet natural retraction and lifting is our goal in the following.
As we will see, doing so yields astonishingly simple barycenters.

In this work, we are particularly interested in the retractions that arise from the projection from the ambient space $\ambient$ to the matrix manifold $\manifold$~\cite{absil2012projection}, defined as $\mathcal{P}(\MAT{X}) = \argmin_{\MAT{G}\in\manifold} \, \| \MAT{X} - \MAT{G} \|_2^2$.
The corresponding retraction is
\begin{equation}
    R_{\MAT{G}}(\MAT{\xi}) = \mathcal{P}(\MAT{G}+\MAT{\xi}).
\label{eq:retr_proj}
\end{equation}
For the lifting, we consider the orthogonal projection mapping on tangent spaces corresponding to the Euclidean metric of $\ambient$.
At $\MAT{G}\in\manifold$, it is denoted $P_{\MAT{G}}:\ambient\to\tangentSpace{\MAT{G}}$.
The lifting is
\begin{equation}
    L_{\MAT{G}}(\MAT{M}) = P_{\MAT{G}}(\MAT{M}-\MAT{G}).
\label{eq:lift_proj}
\end{equation}
This retraction and lifting appear as the simplest natural choices on $\manifold$.
Interestingly, as shown in Proposition~\ref{prop:proj_barycenter}, we soon realised that the resulting barycenter admits a simple closed form expression: it is the projection on $\manifold$ of the arithmetic mean of $\{\MAT{M}_i\}_{i=1}^{\nsamples}$ (which belongs to $\ambient$).

\begin{proposition}[Projection based barycenters]
    \label{prop:proj_barycenter}
    Given the retraction~\eqref{eq:retr_proj} and the lifting~\eqref{eq:lift_proj}, the $RL$-barycenter of $\{\MAT{M}_i\}_{i=1}^{\nsamples}$, according to Definition~\ref{def:RLbarycenter}, is 
    \begin{equation*}
        \MAT{G} = \mathcal{P}\left( \frac{1}{\nsamples}\sum_{i=1}^{\nsamples} \MAT{M}_i \right).
    \end{equation*}
\end{proposition}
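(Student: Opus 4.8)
The plan is to verify that the candidate point $\MAT{G} = \mathcal{P}\bigl(\tfrac{1}{\nsamples}\sum_i \MAT{M}_i\bigr)$ satisfies the fixed-point equation of Definition~\ref{def:RLbarycenter}, using the remark that it suffices to show $\tfrac{1}{\nsamples}\sum_i L_{\MAT{G}}(\MAT{M}_i) = \MAT{0}$ for the chosen lifting~\eqref{eq:lift_proj}. Write $\MAT{\bar M} = \tfrac{1}{\nsamples}\sum_i \MAT{M}_i \in \ambient$, so that $\MAT{G} = \mathcal{P}(\MAT{\bar M})$. Since $P_{\MAT{G}}$ is linear on $\ambient$, the average of the liftings is $\tfrac{1}{\nsamples}\sum_i P_{\MAT{G}}(\MAT{M}_i - \MAT{G}) = P_{\MAT{G}}\bigl(\MAT{\bar M} - \MAT{G}\bigr)$. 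Hence the whole statement reduces to the single claim $P_{\MAT{G}}(\MAT{\bar M} - \MAT{G}) = \MAT{0}$, i.e. the residual between the arithmetic mean and its projection onto $\manifold$ is orthogonal to $\tangentSpace{\MAT{G}}$.

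First I would establish this orthogonality as the core lemma. The point $\MAT{G} = \mathcal{P}(\MAT{\bar M})$ minimizes $\MAT{H} \mapsto \|\MAT{\bar M} - \MAT{H}\|_2^2$ over $\MAT{H}\in\manifold$. Writing the first-order optimality condition for this smooth constrained problem: at a minimizer, the Euclidean gradient of the objective, which is $2(\MAT{G} - \MAT{\bar M})$, must be orthogonal to the tangent space $\tangentSpace{\MAT{G}}$ (this is exactly the statement that the Riemannian gradient vanishes). Equivalently, $P_{\MAT{G}}(\MAT{G} - \MAT{\bar M}) = \MAT{0}$, which by linearity of $P_{\MAT{G}}$ gives $P_{\MAT{G}}(\MAT{\bar M} - \MAT{G}) = \MAT{0}$. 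Combined with the previous paragraph, $\tfrac{1}{\nsamples}\sum_i L_{\MAT{G}}(\MAT{M}_i) = \MAT{0}$, so $\MAT{G}$ solves the fixed-point equation and is therefore the $RL$-barycenter, completing the proof.

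I expect the main obstacle to be making the optimality-condition argument rigorous rather than heuristic: one must be slightly careful that $\mathcal{P}$ is well-defined (the minimizer exists and, at least generically, is unique) and that $\manifold$ is an embedded submanifold of $\ambient$ so that ``gradient orthogonal to the tangent space'' is the correct stationarity condition — for a constrained minimum over an embedded submanifold, the Euclidean gradient of the ambient objective at the optimum lies in the normal space $(\tangentSpace{\MAT{G}})^\perp$, equivalently has zero image under the orthogonal projector $P_{\MAT{G}}$. For the concrete manifolds of interest this is standard: for $\St$ the projector $P^{\St}_{\MAT{U}}$ is given by~\eqref{eq:st_tangent_proj} and $\mathcal{P}^{\St}$ is the polar factor~\eqref{eq:st_proj}, and for $\Gr$ the projector $P^{\Gr}_{\MAT{P}}$ is~\eqref{eq:gr_tangent_proj}; in each case one could alternatively verify $P_{\MAT{G}}(\MAT{\bar M} - \MAT{G}) = \MAT{0}$ by a direct computation from the explicit form of $\mathcal{P}$, which sidesteps any appeal to general submanifold theory. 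A remaining minor point is to note that the lifting~\eqref{eq:lift_proj} indeed maps into $\tangentSpace{\MAT{G}}$ and has the required first-order behaviour $L_{\MAT{G}}(\MAT{M}) = \MAT{M} - \MAT{G} + o(\|\MAT{M}\|)$, but that is immediate from $P_{\MAT{G}}$ being an orthogonal projector onto $\tangentSpace{\MAT{G}}$ fixing $\tangentSpace{\MAT{G}}$ pointwise, so it does not affect the verification above.
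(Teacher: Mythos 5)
Your proposal is correct and follows essentially the same route as the paper: it verifies the sufficient condition $\frac{1}{\nsamples}\sum_{i} L_{\MAT{G}}(\MAT{M}_i)=\MAT{0}$ by observing that $\MAT{G}=\mathcal{P}(\MAT{\bar M})$ minimizes the squared Euclidean distance to $\MAT{\bar M}$ over $\manifold$, so its Riemannian gradient, namely $P_{\MAT{G}}(\MAT{G}-\MAT{\bar M})$, vanishes, which is exactly the paper's argument (the paper just spells out the directional derivative and identification of the gradient rather than quoting the first-order optimality condition for embedded submanifolds).
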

\begin{proof}
    By definition, $\MAT{G} = \argmin_{\MAT{G}\in\manifold} \; \| \frac{1}{\nsamples}\sum_{i=1}^{\nsamples} \MAT{M}_i - \MAT{G} \|_2^2$.
    Let $F(\MAT{G})=\| \frac{1}{\nsamples}\sum_{i=1}^{\nsamples} \MAT{M}_i - \MAT{G} \|_2^2$.
    The directional derivative of $F$ at $\MAT{G}\in\manifold$ in direction $\MAT{\xi}\in\tangentSpace{\MAT{G}}$ is $\diff F(\MAT{G})[\MAT{\xi}]=\langle \MAT{G} - \frac{1}{\nsamples} \sum_{i=1}^{\nsamples}\MAT{M}_i , \MAT{\xi} \rangle$, where $\langle\cdot,\cdot\rangle$ denotes the Euclidean metric on $\ambient$.
    Since $\MAT{\xi}\in\tangentSpace{\MAT{G}}$, one has 
    \begin{equation*}
        \diff F(\MAT{G})[\MAT{\xi}]
        \textstyle
        = \langle P_{\MAT{G}}(\MAT{G} - \frac{1}{\nsamples} \sum_{i=1}^{\nsamples}\MAT{M}_i) , \MAT{\xi} \rangle.
    \end{equation*}
    By identification, it follows that the Riemannian gradient of $F$ at $\MAT{G}$ is $\nabla F(\MAT{G}) = P_{\MAT{G}}(\MAT{G} - \frac{1}{\nsamples} \sum_{i=1}^{\nsamples}\MAT{M}_i)$.
    Moreover, by definition of the projected mean $\MAT{G}$, $\nabla F(\MAT{G})=\MAT{0}$.
    Hence, $P_{\MAT{G}}(\frac{1}{\nsamples} \sum_{i=1}^{\nsamples} (\MAT{M}_i - \MAT{G}))=\frac{1}{\nsamples}\sum_{i=1}^{\nsamples} L_{\MAT{G}}(\MAT{M}_i)=\MAT{0}$.
\end{proof}

In particular, this approach can be employed with the Stiefel manifold $\St$ with the retraction and lifting resulting from projections~\eqref{eq:st_proj} and~\eqref{eq:st_tangent_proj}.
%
It is interesting to notice that both the retraction and lifting were previously considered in the context of $R$-barycenters.
Indeed, the retraction corresponds to the polar retraction~\eqref{eq:st_retr_polar} while the lifting corresponds to the inverse retraction~\eqref{eq:st_retr_orthographic} of the orthographic retraction.
One of the results of the present paper is that, as a direct consequence of Proposition~\ref{prop:proj_barycenter}, $\MAT{G}=\uf(\frac{1}{\nsamples}\sum_{i=1}^{\nsamples} \MAT{M}_i)$ is a closed form solution for the $R$-barycenter with the orthographic retraction.
Hence, in this case, the iterative procedure~\eqref{eq:Rbarycenter_algo} is no longer necessary.
To apply our approach on the Grassmann manifold $\Gr$, the projection map from $\GrAmbient$ onto $\Gr$ is required. It is provided in Proposition~\ref{prop:gr_proj}.

\begin{proposition}[Projection on the Grassmann manifold]
    The projection map from $\GrAmbient$ onto $\Gr$ according to the Euclidean distance is
    \begin{equation*}
        \mathcal{P}^{\Gr}(\MAT{X}) = \argmin_{\MAT{P}\in\Gr} \; \| \MAT{X} - \MAT{P} \|_2^2 = \MAT{V}_{\nrank}\MAT{V}_{\nrank}^\top,
    \label{eq:gr_proj}
    \end{equation*}
    where $\MAT{V}_{\nrank}$ is composed of the $\nrank$ eigenvectors corresponding to the $\nrank$ largest eigenvalues of $\MAT{X}$. 
    \label{prop:gr_proj}
\end{proposition}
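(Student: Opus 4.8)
\medskip

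\noindent\textbf{Proof proposal.}
The plan is to turn the projection problem into a classical eigenvalue trace-maximization. First I would expand the squared Frobenius distance: for any $\MAT{P}\in\Gr$, using that $\MAT{P}$ is symmetric, idempotent, and has trace $\rank(\MAT{P})=\nrank$,
\[
    \| \MAT{X}-\MAT{P} \|_2^2 = \tr(\MAT{X}^2) - 2\tr(\MAT{X}\MAT{P}) + \tr(\MAT{P}^2) = \tr(\MAT{X}^2) - 2\tr(\MAT{X}\MAT{P}) + \nrank ,
\]
so that $\mathcal{P}^{\Gr}(\MAT{X}) = \operatornamewithlimits{argmax}_{\MAT{P}\in\Gr} \tr(\MAT{X}\MAT{P})$. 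Then, since the map $\pi$ of~\eqref{eq:st_to_gr} is surjective, every $\MAT{P}\in\Gr$ can be written $\MAT{P}=\MAT{U}\MAT{U}^\top$ with $\MAT{U}\in\St$, and by cyclicity of the trace $\tr(\MAT{X}\MAT{P}) = \tr(\MAT{U}^\top\MAT{X}\MAT{U})$. Hence the problem reduces to $\max_{\MAT{U}\in\St}\tr(\MAT{U}^\top\MAT{X}\MAT{U})$, a Ky Fan / Rayleigh--Ritz type problem.

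Next I would solve this reduced problem directly from an eigendecomposition $\MAT{X}=\MAT{V}\MAT{\Lambda}\MAT{V}^\top$, with $\MAT{V}\in\mathcal{O}_\nfeatures$ and $\MAT{\Lambda}=\mathrm{diag}(\lambda_1,\dots,\lambda_\nfeatures)$, $\lambda_1\geq\cdots\geq\lambda_\nfeatures$. Setting $\MAT{W}=\MAT{V}^\top\MAT{U}$, which still satisfies $\MAT{W}^\top\MAT{W}=\eye_\nrank$, one gets $\tr(\MAT{U}^\top\MAT{X}\MAT{U}) = \tr(\MAT{W}^\top\MAT{\Lambda}\MAT{W}) = \sum_{i=1}^\nfeatures \lambda_i c_i$, where $c_i$ is the $i$-th diagonal entry of the rank-$\nrank$ orthogonal projector $\MAT{W}\MAT{W}^\top$. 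These coefficients satisfy $0\leq c_i\leq 1$ and $\sum_{i=1}^\nfeatures c_i = \tr(\MAT{W}\MAT{W}^\top)=\nrank$, so maximizing the linear functional $\sum_i\lambda_i c_i$ over this polytope gives $\sum_{i=1}^\nrank\lambda_i$, attained at $c_1=\cdots=c_\nrank=1$, $c_{\nrank+1}=\cdots=c_\nfeatures=0$; up to the freedom in the eigenbasis this corresponds to $\MAT{W}=\begin{bmatrix}\eye_\nrank\\ \MAT{0}\end{bmatrix}$, hence $\MAT{U}=\MAT{V}_\nrank$ and $\MAT{P}=\MAT{V}_\nrank\MAT{V}_\nrank^\top$. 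Plugging this back yields the announced formula.

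There is no genuine obstacle here, only two points needing care. The first is justifying the constraints on the $c_i$: the bounds $0\leq c_i\leq 1$ follow because $\MAT{W}\MAT{W}^\top$ is an orthogonal projector, whose diagonal entries lie in $[0,1]$, and then the maximum of a linear functional over the hypersimplex $\{\,c\in[0,1]^\nfeatures:\ \sum_i c_i=\nrank\,\}$ is a vertex supported on the $\nrank$ largest weights. The second is non-uniqueness: $\MAT{V}_\nrank$ is only defined up to right multiplication by an element of $\mathcal{O}_\nrank$, yet $\MAT{V}_\nrank\MAT{V}_\nrank^\top$ is well defined; and when $\lambda_\nrank=\lambda_{\nrank+1}$ the maximizer is not unique, so the statement should be read as exhibiting the (generic) minimizer. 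I would state this caveat explicitly; everything else --- the norm expansion, cyclicity of the trace, and surjectivity of $\pi$ --- is routine.
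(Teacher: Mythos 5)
Your proof is correct, but it takes a genuinely different route from the paper's. The paper also lifts the problem to the Stiefel manifold via $\MAT{P}=\MAT{U}\MAT{U}^\top$, but then proceeds by Riemannian first-order analysis: it differentiates $f(\MAT{U})=\|\MAT{X}-\MAT{U}\MAT{U}^\top\|_2^2$, projects the Euclidean gradient $(\eye_{\nfeatures}-\MAT{X})\MAT{U}$ onto $\StTangent{\MAT{U}}$ to get the Riemannian gradient, verifies that matrices $\MAT{V}_\nrank$ of $\nrank$ eigenvectors of $\MAT{X}$ are critical points, and finally compares the cost at these candidates, $f(\MAT{V}_\nrank)=\|\MAT{\Lambda}_\nrank-\eye_\nrank\|_2^2+\|\MAT{\Lambda}_{\nfeatures-\nrank}\|_2^2$, to select the top-$\nrank$ eigenvectors. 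You instead expand the squared norm (using $\tr(\MAT{P}^2)=\tr(\MAT{P})=\nrank$), reduce to maximizing $\tr(\MAT{X}\MAT{P})=\tr(\MAT{U}^\top\MAT{X}\MAT{U})$, and settle this Ky Fan problem by the diagonal-of-a-projector relaxation ($0\le c_i\le 1$, $\sum_i c_i=\nrank$), whose linear maximum over the polytope is attained at the indicator of the $\nrank$ largest eigenvalues. Your argument is arguably tighter as a global statement: it gives an upper bound valid over all of $\Gr$ that is attained at $\MAT{V}_\nrank\MAT{V}_\nrank^\top$, whereas the paper's route implicitly uses existence of a minimizer (compactness of $\Gr$) together with the claim that every critical point is spanned by eigenvectors of $\MAT{X}$, a characterization the supplementary material asserts but only verifies in the sufficient direction; what the paper's computation buys in exchange is an explicit Riemannian gradient in the spirit of the rest of the letter, and a shorter final comparison of candidate values. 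Your caveats on non-uniqueness (invariance under right multiplication of $\MAT{V}_\nrank$ by $\mathcal{O}_\nrank$, and ties $\lambda_\nrank=\lambda_{\nrank+1}$) are apt and apply equally to the paper's statement.
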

\begin{proof}
    See Supplementary materials.
\end{proof}

We further believe that our results extend to more generic projections, \textit{i.e.}, mappings $\widetilde{\mathcal{P}}:\ambient\to\manifold$ such that $\widetilde{\mathcal{P}}^2(\MAT{X})=\widetilde{\mathcal{P}}(\MAT{X})$. 
From~\cite{absil2012projection}, we know that, in the case of $\mathcal{P} : \MAT{X}\in\ambient \mapsto \argmin_{\MAT{G}\in\manifold} \; \|\MAT{X}-\MAT{G}\|_2^2$, we have $P_{\MAT{G}}(\MAT{Z})=\diff\mathcal{P}(\MAT{G})[\MAT{Z}]$.
Hence, if we set $R_{\MAT{G}}(\MAT{\xi})=\widetilde{\mathcal{P}}(\MAT{G}+\MAT{\xi})$ and $L_{\MAT{G}}(\MAT{M})=\diff\widetilde{\mathcal{P}}(\MAT{G})[\MAT{M}-\MAT{G}]$, then the corresponding $RL$-barycenter is the arithmetic mean projected on $\manifold$ with $\widetilde{\mathcal{P}}$, \textit{i.e.}, $\MAT{G}=\widetilde{\mathcal{P}}(\frac{1}{\nsamples}\sum_{i=1}^{\nsamples} \MAT{M}_i)$.
To obtain this, it is needed to show that, with $\MAT{G}=\widetilde{\mathcal{P}}(\frac{1}{\nsamples}\sum_{i=1}^{\nsamples} \MAT{M}_i)$, we have $\diff\widetilde{\mathcal{P}}(\MAT{G})[\frac{1}{\nsamples}\sum_{i=1}^{\nsamples} \MAT{M}_i - \MAT{G}]=\MAT{0}$.
Intuitively, this seems to be the case but proving it is beyond the scope of the present letter in the general case.
In supplementary materials, we show that this actually works on $\St$ with the projection based on the QR decomposition, \textit{i.e.},
\begin{equation}
    \MAT{G} = \qf\left( \frac{1}{\nsamples}\sum_{i=1}^{\nsamples} \MAT{M}_i \right)
    \label{eq:st_proj_barycenter_QR}
\end{equation}
is the $RL$-barycenter of $\{ \MAT{M}_i \}_{i=1}^{\nsamples}$ with the QR retraction~\eqref{eq:st_retr_qr} and the lifting $L_{\MAT{G}}(\MAT{M})=\diff\qf(\MAT{G})[\MAT{M}-\MAT{G}]$.

\section{Numerical experiments}
\label{sec:num_exp}

In this section, numerical experiments on simulated data are conducted to evaluate the performance of the proposed projected arithmetic means from Proposition~\ref{prop:proj_barycenter} on Stiefel and Grassmann manifolds.
The projected mean~\eqref{eq:st_proj_barycenter_QR} based on the QR decomposition is also considered.
For the Stiefel manifold $\St$, the performance of proposed means are compared to the ones of $R$-barycenters~\cite{kaneko2012empirical} exploiting the polar~\eqref{eq:st_retr_polar}, QR~\eqref{eq:st_retr_qr} and orthographic~\eqref{eq:st_retr_orthographic} retractions.
For the Grassmann manifold, the projected mean is compared to the Riemannian mean; see, \textit{e.g.},~\cite{absil2004riemannian, batzies2015geometric}.
In every cases, iterative algorithms are initialized with the first sample of the dataset to average.

Let us now describe how simulated data are obtained.
For the Stiefel manifold, a random center $\MAT{G}_{\St}$ by taking the $\nrank$ first columns of a $\nfeatures\times\nfeatures$ orthogonal matrix uniformly drawn on $\mathcal{O}_{\nfeatures}$.
From there, $\nsamples$ random samples $\MAT{U}_i$ are generated according to $\MAT{U}_i=\expm(\sigma\MAT{\Omega}_i)\MAT{G}_{\St}$, where $\sigma>0$ and $\MAT{\Omega}_i$ is obtained by taking the skew-symmetrical part of a $\nfeatures\times\nfeatures$ matrix whose elements are independently drawn from the centered normal distribution with unit variance.
For Grassmann, the random center $\MAT{G}_{\Gr}$ as well as the random samples $\MAT{P}_i$ are obtained by projecting $\MAT{G}_{\St}$ and $\MAT{U}_i$ on $\Gr$ through~\eqref{eq:st_to_gr}.

To measure the performance on $\St$, we rely on the same similarity measure as in~\cite{kaneko2012empirical}, \textit{i.e.},
\begin{equation}
    \err_{\St}(\MAT{G}_{\St},\MAT{\widehat{G}}) = \| \MAT{G}_{\St}^\top\MAT{\widehat{G}} - \eye_{\nrank} \|_2^2.
    \label{eq:err_st}
\end{equation}
For $\Gr$, we employ the Riemannian distance~\cite{bendokat2024grassmann}, yielding
\begin{equation}
    \textstyle
    \err_{\Gr}(\MAT{G}_{\Gr},\MAT{\widehat{G}}) =
    \|\frac{1}{2} \logm((\eye_{\nfeatures}-\frac12\MAT{G}_{\Gr})(\eye_{\nfeatures}-\frac12\MAT{\widehat{G}}))\|_2^2.
    \label{eq:err_gr}
\end{equation}

Obtained results are displayed in Figures~\ref{fig:stiefel} and~\ref{fig:grassmann}.
Notice that, on $\St$, the results obtained with the $R$-barycenter associated to the orthographic retraction are not displayed since, as expected, it yields the same results as the projected arithmetic mean with the projection based on the polar decomposition (in all considered cases, the difference is lower than $10^{-10}$).
We observe that our proposed projected means perform well on both Stiefel and Grassmann manifolds as compared to other considered barycenters on these simulated data.
On Stiefel, $R$-barycenters based on polar and QR retractions do not perform well as the distance of samples to the mean increases, while our proposed projected arithmetic means remain competitive.


\section{Conclusion and perspectives}
\label{sec:conclusion}

In conclusion, due to their performance and simplicity, our proposed projected arithmetic means appear advantageous as compared to state-of-the-art, computationally expensive, iterative mean estimators both on Stiefel and Grassmann manifolds.
We believe that our approach can also be employed to other manifolds such as the one of symmetric positive semi-definite matrices; see, \textit{e.g.},~\cite{bonnabel2013rank,bouchard2021riemannian}.

\bibliographystyle{unsrt}
\bibliography{biblio}

\appendices
\section*{Supplementary materials}

\subsection{Projection on the Grassmann manifold}
This section contains the proof of Proposition~\ref{prop:gr_proj}.
Due to the structure of the solution, it is better to rely on the representation of Grassmann corresponding to the quotient of the Stiefel manifold $\St$ by the orthogonal group $\mathcal{O}_{\nrank}$~\cite{edelman1998geometry, absil2008optimization}.
In this case, the equivalence class at $\MAT{U}\in\St$ is $\{\MAT{U}\MAT{O}: \, \MAT{O}\in\mathcal{O}_{\nrank}\}$.
The mapping linking this quotient representation of Grassmann to $\Gr$ identified as the rank $\nrank$ projector space~\eqref{eq:gr} is the one in~\eqref{eq:st_to_gr}.
With this parametrization, given $\MAT{X}\in\GrAmbient$, the optimization problem becomes
\begin{equation*}
    \argmin_{\MAT{U}\in\Gr} \quad f(\MAT{U}) = \| \MAT{X} - \MAT{U}\MAT{U}^\top \|_2^2.
\end{equation*}
One can show that the directional derivative of $f$ is
\begin{equation*}
    \begin{split}
        \diff f(\MAT{U})[\MAT{\xi}]
        & = 2 \tr((\MAT{U}\MAT{U}^\top - \MAT{X})(\MAT{U}\MAT{\xi}^\top +\MAT{\xi}\MAT{U}^\top))
        \\
        & = 4 \tr((\MAT{U}\MAT{U}^\top - \MAT{X})\MAT{U}\MAT{\xi}^\top)
        \\
        & = 4 \tr((\eye_{\nfeatures} - \MAT{X})\MAT{U}\MAT{\xi}^\top)
    \end{split}
\end{equation*}
From there, the Euclidean gradient of $f$ is
\begin{equation*}
    \nabla^{\mathcal{E}} f(\MAT{U}) = (\eye_{\nfeatures} - \MAT{X})\MAT{U}.
\end{equation*}
The Riemannian gradient of $f$ on Stiefel $\St$ is then obtained by projecting this Euclidean gradient with the projection~\eqref{eq:st_tangent_proj}, which yields
\begin{equation*}
    \nabla f(\MAT{U}) = (\eye_{\nfeatures} - \MAT{U}\MAT{U}^\top) (\eye_{\nfeatures} - \MAT{X}) \MAT{U}.
\end{equation*}
This also directly corresponds to the Riemannian gradient of $f$ on the quotient representation of Grassmann thanks to the invariance property of $f$ along equivalence classes.
The critical points of $f$ are matrices $\MAT{V}_{\nrank}\in\St$ composed of $\nrank$ eigenvectors of $\MAT{X}$.
Indeed, let $\MAT{V}_{\nrank}\in\St$, $\MAT{V}_{\nfeatures-\nrank}\in\textup{St}_{\nfeatures,\nfeatures-\nrank}$, $\MAT{\Lambda}_{\nrank}\in\mathcal{D}_{\nrank}$ ($\nrank\times\nrank$ diagonal matrices) and $\MAT{\Lambda}_{\nfeatures-\nrank}\in\mathcal{D}_{\nfeatures-\nrank}$ correspond to the eigenvalue decomposition of $\MAT{X}$, \textit{i.e.}, 
\begin{equation*}
    \begin{split}
        \MAT{X}
        & = \MAT{V}_{\nrank}\MAT{\Lambda}_{\nrank}\MAT{V}_{\nrank}^\top + \MAT{V}_{\nfeatures-\nrank}\MAT{\Lambda}_{\nfeatures-\nrank}\MAT{V}_{\nfeatures-\nrank}^\top
        \\
        & =
        \begin{bmatrix}
            \MAT{V}_{\nrank} & \MAT{V}_{\nfeatures-\nrank}
        \end{bmatrix}
        \begin{bmatrix}
            \MAT{\Lambda}_{\nrank} & \MAT{0}
            \\
            \MAT{0} & \MAT{\Lambda}_{\nfeatures - \nrank}
        \end{bmatrix}
        \begin{bmatrix}
            \MAT{V}_{\nrank}^\top \\ \MAT{V}_{\nfeatures-\nrank}^\top
        \end{bmatrix}
    \end{split}
\end{equation*}
One further has $\eye_{\nfeatures} = \MAT{V}_{\nrank}\MAT{V}_{\nrank}^\top + \MAT{V}_{\nfeatures-\nrank}\MAT{V}_{\nfeatures-\nrank}^\top$ and $\MAT{V}_{\nrank}^\top\MAT{V}_{\nfeatures-\nrank}=\MAT{0}$.
It follows that
\begin{multline*}
    \nabla f(\MAT{V}_{\nrank}) = 
    \\
    \MAT{V}_{\nfeatures-\nrank}
    \begin{bmatrix}
        \MAT{0} & \eye_{\nfeatures - \nrank}
    \end{bmatrix}
    \begin{bmatrix}
            \eye_{\nrank} - \MAT{\Lambda}_{\nrank} & \MAT{0}
            \\
            \MAT{0} & \eye_{\nfeatures - \nrank} - \MAT{\Lambda}_{\nfeatures - \nrank}
        \end{bmatrix}
    \begin{bmatrix}
        \eye_{\nrank} \\ \MAT{0}
    \end{bmatrix}.
\end{multline*}
Hence, $\nabla f(\MAT{V}_{\nrank}) = \MAT{0}$, and $\MAT{V}_{\nrank}$ is a critical point.

It remains to determine the set of $\nrank$ eigenvectors of $\MAT{X}$ which yields the minimum.
To do so, let's look at the cost function at $\MAT{V}_{\nrank}$, which is
\begin{equation*}
    f(\MAT{V}_{\nrank})
    = \|\MAT{X} - \MAT{V}_{\nrank}\MAT{V}_{\nrank}^\top\|_2^2
    = \|\MAT{\Lambda}_{\nrank}-\eye_{\nrank}\|_2^2 + \|\MAT{\Lambda}_{\nfeatures-\nrank}\|_2^2.
\end{equation*}
In order to minimize $f$, we must select the eigenvalues such that $\|\MAT{\Lambda}_{\nrank}-\eye_{\nrank}\|_2^2$ is the smallest possible.
These thus correspond to the largest eigenvalues of $\MAT{X}$, which concludes the proof of Proposition~\ref{prop:gr_proj}.

\subsection{$RL$-barycenter on Stiefel based on the QR decomposition}
We consider the $RL$-barycenter from Definition~\ref{def:RLbarycenter} on the Stiefel manifold $\St$ with the retraction based on the QR decomposition defined in~\eqref{eq:st_retr_qr} and with the lifting $L_{\MAT{G}}(\MAT{M})=\diff\qf(\MAT{G})[\MAT{M}-\MAT{G}]$.
Given samples $\{\MAT{M}_i\}_{i=1}^{\nsamples}$, the corresponding fixed-point equation is
\begin{equation*}
    \MAT{G} = \qf\left(\MAT{G} + \frac{1}{\nsamples}\sum_{i=1}^{\nsamples} \diff\qf(\MAT{G})[\MAT{M}_i-\MAT{G}]\right).
\end{equation*}
A solution, if it exists, is such that $\diff\qf(\MAT{G})[\MAT{A} - \MAT{G}]=\MAT{0}$, where $\MAT{A}=\frac{1}{\nsamples}\sum_{i=1}^{\nsamples}\MAT{M}_i$.
Our objective here is to show that the projected arithmetic mean $\MAT{G}=\qf(\frac{1}{\nsamples}\sum_{i=1}^{\nsamples}\MAT{M}_i)$ is a solution.

First, we compute the differential of the QR decomposition in order to get the expression of $\diff\qf(\MAT{G})$.
Let $\MAT{B}=\MAT{Q}\MAT{R}$, where $\MAT{Q}\in\St$ and $\MAT{R}\in\mathcal{T}_{\nrank}$ (space of upper triangular matrices).
It follows that $\diff\MAT{B}=\diff\MAT{Q} \MAT{R} + \MAT{Q}\diff\MAT{R}$, where $\diff\MAT{Q}\in\StTangent{\MAT{Q}}$ and $\diff\MAT{R}\in T_{\MAT{R}}\mathcal{T}_{\nrank}\simeq\mathcal{T}_{\nrank}$ ($\mathcal{T}_{\nrank}$ is a vector space).
From~\cite{absil2008optimization}, we know that there exists $\MAT{\Omega}\in\mathcal{A}_{\nrank}$ (space of skew-symmetric matrices) and $\MAT{K}\in\realspace^{(\nfeatures-\nrank)\times\nrank}$, such that $\diff\MAT{Q}=\MAT{Q}\MAT{\Omega} + \MAT{Q}_{\perp}\MAT{K}$, where $\MAT{Q}_{\perp}\in\textup{St}_{\nfeatures,(\nfeatures-\nrank)}$ is an orthogonal complement of $\MAT{Q}$, \textit{i.e.}, $\MAT{Q}_{\perp}^\top\MAT{Q}=\MAT{0}$.
From there, one gets $\diff\MAT{B}=\MAT{Q}(\MAT{\Omega}\MAT{R}+\diff\MAT{R}) + \MAT{Q}_{\perp}\MAT{K}$.
It follows that $\MAT{Q}^\top\diff\MAT{B}\MAT{R}^{-1}=\MAT{\Omega}+\diff\MAT{R}\MAT{R}^{-1}$ and $\MAT{Q}_{\perp}^{\top}\diff\MAT{B}=\MAT{K}$.
Since $\diff\MAT{R}$, $\MAT{R}\in\mathcal{T}_{\nrank}$, and $\MAT{\Omega}\in\mathcal{A}_{\nrank}$, one can deduce that $\MAT{\Omega}=\tril(\MAT{Q}^\top\diff\MAT{B}\MAT{R}^{-1}) - \tril(\MAT{Q}^\top\diff\MAT{B}\MAT{R}^{-1})^\top$, where $\tril(\cdot)$ cancels the diagonal and upper triangular elements of its argument.
One thus gets
\begin{multline*}
    \diff\qf(\MAT{B})[\diff\MAT{B}] =
    \MAT{Q}_{\perp}\MAT{Q}_{\perp}^\top \diff\MAT{B}
    \\
    + \MAT{Q}(\tril(\MAT{Q}^\top\diff\MAT{B}\MAT{R}^{-1}) - \tril(\MAT{Q}^\top\diff\MAT{B}\MAT{R}^{-1})^\top).
\end{multline*}

We are interested in $\diff\qf(\MAT{G})[\MAT{A} - \MAT{G}]$.
By construction, $\MAT{G}\in\St$ and $\MAT{A}=\MAT{G}\MAT{R}$.
Hence,
\begin{multline*}
    \diff\qf(\MAT{G})[\MAT{A} - \MAT{G}] = \MAT{G}_{\perp}\MAT{G}_{\perp}^\top(\MAT{G}\MAT{R} - \MAT{G})
    \\
    + \MAT{G}(\tril(\MAT{G}^\top(\MAT{G}\MAT{R} - \MAT{G})) - \tril(\MAT{G}^\top(\MAT{G}\MAT{R} - \MAT{G}))^\top).
\end{multline*}
By definition, we have $\MAT{G}_{\perp}^\top\MAT{G}=\MAT{0}$.
Moreover, $\tril(\MAT{G}^\top(\MAT{G}\MAT{R} - \MAT{G}))=\tril(\MAT{R}-\MAT{I}_{\nrank})=\MAT{0}$.
It is enough to conclude that $\MAT{G}=\qf(\frac{1}{\nsamples}\sum_{i=1}^{\nsamples}\MAT{M}_i)$ is indeed an $RL$-barycenter on $\St$ with the retraction based on the QR decomposition defined in~\eqref{eq:st_retr_qr} and with the lifting $L_{\MAT{G}}(\MAT{M})=\diff\qf(\MAT{G})[\MAT{M}-\MAT{G}]$.

\end{document}